\def\Vec#1{{\boldsymbol{#1}}}
\def\Mat#1{{\boldsymbol{#1}}}
\newcommand{\ie}{{i.e.}\@\xspace}
\newcommand{\etal}{{et~al.}\@\xspace}
\newcommand{\tr}{\mbox{tr}}
\newtheorem{theorem}{Theorem}
\begin{document}

\onehalfspacing

\pagestyle{empty}


\begin{center}

{\LARGE\bf Log-Euclidean Bag of Words for Human Action Recognition}

~\\

{
\large
Masoud Faraki~$^{1,2}$, Maziar Palhang~$^{1}$, Conrad Sanderson~$^{3,4}$\\
~\\
$^{1}$~{Isfahan~University~of~Technology, Iran}\\
$^{2}$~{Australian National University, Canberra, Australia}\\
$^{3}$~{University of Queensland, Brisbane, Australia}\\
$^{4}$~{NICTA, Australia}\\
}
\end{center}


\section*{Abstract}

Representing videos by densely extracted local space-time features has recently become a popular approach for analysing actions.
In this paper, we tackle the problem of categorising human actions by devising Bag of Words (BoW) models
based on covariance matrices of spatio-temporal features, with the features formed from histograms of optical flow.
Since covariance matrices form a special type of Riemannian manifold, the space of Symmetric Positive Definite (SPD) matrices,
non-Euclidean geometry should be taken into account while discriminating between covariance matrices.
To this end, we propose to embed SPD manifolds to Euclidean spaces via a diffeomorphism and extend the BoW approach to its Riemannian version.
The proposed BoW approach takes into account the manifold geometry of SPD matrices during the generation of the codebook and histograms.
Experiments on challenging human action datasets show that the proposed method obtains notable improvements in discrimination accuracy,
in comparison to several state-of-the-art methods.
\\


\begin{small}
\noindent
{\bf Published as:}
\begin{itemize}
\item
Masoud Faraki, Maziar Palhang, Conrad Sanderson.\\
Log-Euclidean Bag of Words for Human Action Recognition.\\
IET Computer Vision, Vol.~9, No.~3, pp.~331--339, 2015.\\
\href{http://dx.doi.org/10.1049/iet-cvi.2014.0018}{http://dx.doi.org/10.1049/iet-cvi.2014.0018}
\end{itemize}
\end{small}

\section{Introduction}
\label{sec:intro}

Among several video analysis tasks, human action recognition has received significant attention,
mainly because of its applications to visual surveillance, content-based video analysis, and
human-computer interaction~\cite{aggarwal2011human,carvajal2014,poppe2010survey,reddy_cvprw_2011,weinland2011survey,tsitsoulis2013first}.
Many methods have been proposed for reliable action recognition based on various feature
detectors/descriptors to capture local motion patterns~\cite{dollar2005behavior,laptev2005space,klaser2008spatio,scovanner20073,willems2008efficient,wang2009evaluation,wang2013dense}.
Dense space-time representation of videos has been recently shown to be promising for the action categorisation task~\cite{wang2009evaluation,wang2013dense}.
This in turn suggests the need for employing descriptors to compactly represent the dense collection of local features.

In this paper, we utilise region covariance matrices, composed from densely sampled features, as the descriptors.
Such use of covariance matrices as image descriptors is relatively novel.
They were introduced by Tuzel~\etal~\cite{Cov_Descriptor_ECCV2006} and since then have been employed successfully for pedestrian detection~\cite{Tuzel_2008_PAMI},
non-rigid object tracking~\cite{Porikli:CVPR:2006}, face recognition~\cite{PANG:TCSVT:2008}, and analysing diffusion tensor images~\cite{Pennec_jmiv06}.
Furthermore, a spatio-temporal version of covariance matrix descriptors has shown superior performance for action/gesture recognition~\cite{sanin2013spatio}.

Utilising a covariance matrix as a region descriptor has several advantages.
Firstly, it captures second-order statistics of the local features.
Secondly, it is straightforward approach to fusing various (correlated) features.
Thirdly, it is a low dimensional descriptor and is independent of the size of the region.
Fourth, through the averaging process in its computation, the impact of the noisy samples is reduced.
Finally, efficient methods for its fast computation in images and videos are available~\cite{Tuzel_2008_PAMI,sanin2013spatio}.
While the above advantages make covariance-based descriptors attractive,
using them for discrimination purposes can be challenging.
Covariance matrices are Symmetric Positive Definite (SPD) matrices and naturally form a connected Riemannian manifold.
This can make inference methods based on covariance matrices more difficult, as manifold curvature needs to be taken into account~\cite{Pennec_jmiv06,Tuzel_2008_PAMI}.

Within the fields of image categorisation and face recognition it has been shown that discrimination approaches
based on Bag of Words (BoW) are effective~\cite{nowak_eccv_2006,sanderson_lncs_2009,wiliem_pr_2014,wong_ietbio_2014}.
In a traditional BoW approach, a set of low-level descriptors is typically encoded as a high-dimensional histogram,
with each entry in the histogram representing a count or probability of occurrence of a `visual' word.
The dictionary (or codebook) of words is fixed and obtained during a training stage, typically through {\it k}-means clustering.
The resulting histograms are then interpreted as medium-level feature vectors and fed into standard classifiers.

{\bf Contributions.}
Following the trend of adapting machine learning tools originally designed for vector spaces
to their Riemannian counterparts~\cite{Pennec_jmiv06,Tuzel_2008_PAMI,Subbarao_2009_IJCV,Sra:2011:ECML,Harandi_WACV_2012,harandi2012sparse,sanin2013spatio,shirazi2015,yuan2010human},
in this work we propose to extend the general BoW approach to handle covariance matrices that are treated as points on a Riemannian manifold.
We first form spatio-temporal covariance descriptors from densely extracted motion-based features,
namely Histograms of Optical Flow (HOF) introduced by Laptev \etal~\cite{laptev2008learning}.
The covariance descriptors are then encoded in a Log-Euclidean Bag of Words (LE-BoW) model.
To achieve this, we use a diffeomorphism and form the LE-BoW model by embedding the Riemannian manifold into a vector space.
The embedding is obtained by flattening the manifold through tangent spaces.
We explore several encoding methods within the LE-BoW framework.
We then compare and contrast the proposed approach against recent action recognition methods
proposed by Wang \etal~\cite{wang2013dense}, Messing \etal~\cite{messing2009activity}, and Niebles \etal~\cite{niebles2010modeling}.
Empirical results on three datasets
(KTH~\cite{schuldt2004recognizing}, Olympic Sports~\cite{niebles2010modeling}, Activity of Daily Living~\cite{messing2009activity})
show that the proposed action recognition approach obtains superior performance.

%
%
%
%

We continue this paper as follows.
Section~\ref{sec:related_work} provides an overview of recent work in action recognition.
Section~\ref{sec:preliminary} is dedicated to Riemannian geometry and serves as a grounding for following sections.
Section~\ref{sec:Riemannian_bow} discusses the LE-BoW model.
In Section~\ref{sec:experiments} we compare the performance of the proposed method with previous approaches on several datasets.
The main findings and future directions are summarised in Section~\ref{sec:conclusions}.
\section{Related Work}
\label{sec:related_work}

Human action recognition has been addressed extensively in the computer vision community from various perspectives.
Some methods rely on global descriptors;
two examples are the methods proposed by Ali and Shah~\cite{ali2010human} and Razzaghi \etal~\cite{razzaghi2012new}. 
In~\cite{ali2010human}, a set of optical flow based kinematic features is extracted.
Kinematic models are computed by applying principal component analysis on the volumes of kinematic features.
Razzaghi \etal~\cite{razzaghi2012new} represent human motion by spatio-temporal volume and propose a 
new affine invariant descriptor based on a function of spherical harmonics.
A downside of global representations is their reliance on localisation of the region of interest,
and hence they are sensitive to viewpoint change, noise, and occlusion~\cite{sanin2013spatio}.

To address the abovementioned issues, videos of actions can also be represented through sets of local features,
either in a sparse~\cite{schuldt2004recognizing,dollar2005behavior} or dense~\cite{wang2009evaluation,wang2013dense} manner.
Sparse feature detectors (also referred to as interest point detectors) abstract video 
information by maximising saliency functions at every point in order to extract salient 
spatio-temporal patches. Examples are Harris3D~\cite{Laptev03space-timeinterest} and Cuboid~\cite{dollar2005behavior} detectors.
Laptev and Lindeberg~\cite{Laptev03space-timeinterest} extract interest points at multiple scales using a 3D Harris 
corner detector and subsequently process the extracted points for modelling actions.
The Cuboid detector proposed by Dollar \etal~\cite{dollar2005behavior} extracts salient points based on temporal Gabor filters.
It is especially designed to extract space-time points with local periodic motions.

Wang \etal~\cite{wang2009evaluation} demonstrate that dense sampling approaches consistently 
outperform space-time interest point based methods for human action categorisation.
A dense sampling at regular positions in space and time guarantees good coverage of foreground motions as well as of surrounding context.
To characterise local patterns (\ie motion, appearance, or shape), the descriptors divide small 3D volumes into a grid of $n_x \times n_y \times n_t$ cells
and for each cell the related information is accumulated.
Examples are HOG and HOF~\cite{laptev2008learning}, HOG3D~\cite{klaser2008spatio}, and 3D SIFT~\cite{scovanner20073}.

An alternative line of research proposes to track given spatial point over time and capture related information.
Messing \etal~\cite{messing2009activity} track 
Harris3D~\cite{Laptev03space-timeinterest} interest points with a KLT tracker~\cite{lucas1981iterative} and extract velocity history information.
To improve performance, other useful features such as appearance and location are taken into account in a generative mixture model.
Recently, Wang \etal~\cite{wang2013dense} show promising results by 
tracking densely sampled points and extract aligned shape, appearance, and motion features. They 
also introduce Motion Boundary Histograms (MBH) based on differential optical flow.

\section{Riemannian Geometry}
\label{sec:preliminary}

In this section, we review Riemannian geometry on the manifold of real SPD matrices.
We first formally define a covariance matrix descriptor for the whole video.
Let $\mathbb{I} = \{I_{t}\}_{t = 1}^T$ denote a set of $W \times H$ greyscale frames of a video.
Also, let $\mathbb{O} = \{\Vec{o}_i\}_{i=1}^{n}$ be a set of observations $\Vec{o}_i \in \mathbb{R}^d$ extracted from $\mathbb{I}$.
For example, one might extract a $d$ dimensional feature vector at each pixel, resulting in $W \times H \times T$ observations.
Then, $\mathbb{I}$ can be represented by a $d \times d$ covariance matrix of the observations as:
\begin{eqnarray}
\Mat{C}_{I} & = & \frac{1}{n-1} \sum\nolimits_{i = 1}^{n} \left(\Vec{o}_i - \Vec{\mu} \right)\left(\Vec{o}_i - \Vec{\mu} \right)^T\;, \\
\Vec{\mu}   & = & \frac{1}{n} \sum\nolimits_{i = 1}^{n} \Vec{o}_i\;. \nonumber
\label{eqn:cov_desc}
\end{eqnarray}

The entries on the diagonal of matrix $\Mat{C}_I$ are the variances of each feature
and the non-diagonal entries are their pairwise correlations (see Fig.~\ref{fig:Diagram} for a conceptual diagram).
There are several reasons as to why covariance matrices are attractive for representing images and videos:
{\bf (i)}~they provide a natural way for fusing various features;
{\bf (ii)}~they can reduce the impact of noisy samples through the averaging operation in its computation;
{\bf (iii)}~a $d \times d$ covariance matrix is usually low-dimensional and independent of the size of the region;
{\bf (iv)}~they can be efficiently computed using integral images/videos~\cite{Tuzel_2008_PAMI,sanin2013spatio};
{\bf (v)}~affine invariant metrics exist to compare covariance matrices~\cite{Pennec_jmiv06}.

A \textit{manifold}, $\mathcal{M}$, is a locally Euclidean topological space.
Locally Euclidean means that each point has some neighbourhood that is homeomorphic (one-to-one, onto, and continuous in both directions) to an open ball in $\mathbb{R}^d$, for some $d$.
The \textit {tangent space} at a point {$\Mat{P}$} on the manifold, $T_{\Mat{P}}{\mathcal{M}}$,
is a vector space that consists of the tangent vectors of all possible curves passing through {$\Mat{P}$}
(see Fig.~\ref{fig:manifold} for an illustration).
On the manifold, a \textit {Riemannian metric} is defined as a continuous collection of dot products on the tangent space $T_{\Mat{P}}\mathcal{M}$ at each {{$\Mat{P}$} $\in \mathcal{M}$}.
The Riemannian metric of the manifold enables us to define geometric notions on the manifold such as lengths and angles.
The \textit {geodesic distance} between two points on the manifold is defined as the length of the shortest curve connecting the two points.

A \textit{Riemannian manifold} $(\mathcal{M},\mathrm{g})$ consists of the analytic manifold
$\mathcal{M}$ and its associated metric $\mathrm{g}_\Mat{P}(.,.):\mathcal{M} \times
\mathcal{M}\rightarrow \mathcal{R}$ that varies smoothly on $T_\Mat{P}\mathcal{M}$. The function
$\mathrm{g}$ has a symmetric, positive definite bi-linear form on each {{$\Mat{p}$} $\in T_\Mat{P}\mathcal{M}$}.
It can be chosen to provide robustness to some geometrical transformations.

Two operators, namely the \textit{exponential map} {$\exp_{\Mat{P}}(\cdot):T_{\Mat{P}}{\mathcal{M}} \rightarrow \mathcal{M}$}
and the \textit{logarithm map} {$\log_{\Mat{P}}(\cdot)=\exp^{-1}_{\Mat{P}}(\cdot):\mathcal{M} \rightarrow T_{\Mat{P}}{\mathcal{M}}$},
are defined over differentiable manifolds to switch between the manifold and tangent space at { $\Mat{P}$}.
The exponential operator maps a tangent vector {$\Delta$} to a point {$\Mat{X}$} on the manifold.
The property of the exponential map ensures that the length of {$\Delta$}
becomes equal to the geodesic distance between {$\Mat{X}$} and {$\Mat{P}$}.
The logarithm map is the inverse of the exponential map and maps a point on the manifold to the tangent space {$T_{\Mat{P}}{\mathcal{M}}$}.
The exponential and logarithm maps vary as point {$\Mat{P}$} moves along the manifold.
We refer interested readers to~\cite{BHATIA_2007,Lui2011} for more detailed treatment on manifolds and related topics.

\begin{figure}[!b]
  \centering
  \includegraphics[width=1\linewidth]{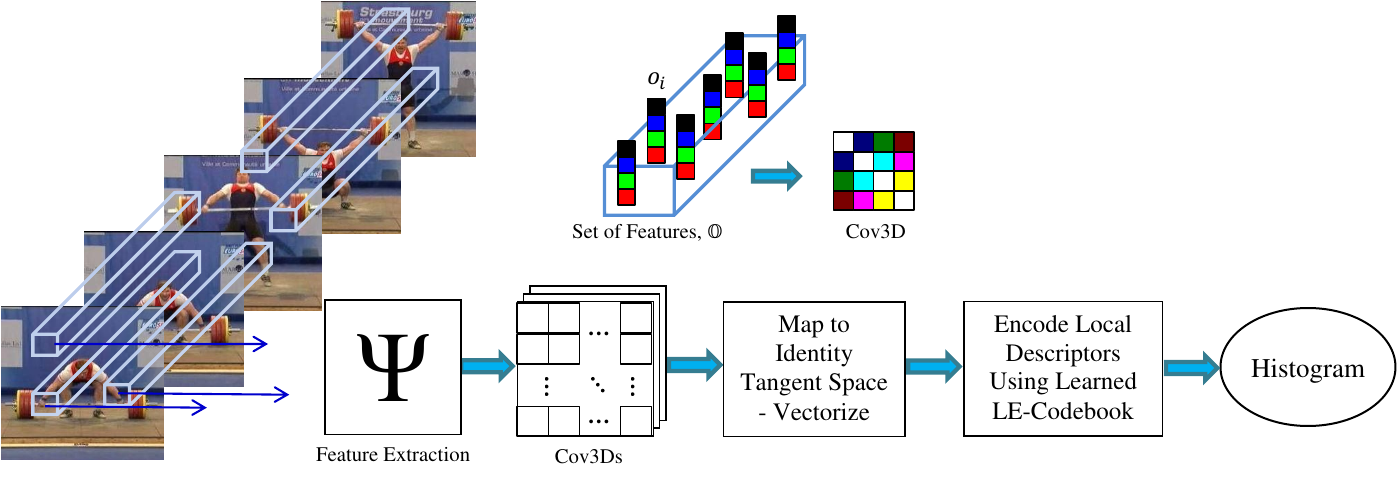}
  \vspace{-2ex}
  \caption
    {
    Conceptual block diagram showing computations of LE-BoW histogram generation.
    }
  \label{fig:Diagram}
\end{figure}

\subsection{Riemannian Manifold of SPD Matrices}

The space of real $d \times d$ SPD matrices, $\mathcal{S}_{++}^d$, forms a Lie Group which is an
algebraic group with a manifold structure. It is natural to use the language of Riemannian manifolds
and all the related concepts of differential geometry when discussing $\mathcal{S}_{++}^d$.

The Affine Invariant Riemannian Metric (AIRM)~\cite{Pennec_jmiv06} on $\mathcal{S}_{++}^d$ is defined as:
\begin{equation}
	\langle \Vec{v}, \Vec{w} \rangle_\Mat{P} :=  \langle \Mat{P}^{-1/2}\Vec{v}\Mat{P}^{-1/2}, \Mat{P}^{-1/2}\Vec{w}\Mat{P}^{-1/2} \rangle
	= \tr \left( \Mat{P}^{-1} \Vec{v} \Mat{P}^{-1} \Vec{w}\right)\;,
	\label{eqn:AIRM_equ}
\end{equation}

\noindent
for $\Mat{P} \in \mathcal{S}_{++}^d$ and $\Vec{v},\Vec{w} \in T_{\Mat{P}}{\mathcal{M}}$,
induces the following geodesic distance between points $\Mat{X},\Mat{Y} \in \mathcal{S}_{++}^d$:
\begin{equation}
\delta_R(\Mat{X},\Mat{Y}) = \|\log(\Mat{X}^{-1/2}\Mat{Y}\Mat{X}^{-1/2})\|_F\;.
\label{eqn:geodesic_distance}
\end{equation}

For the AIRM, the logarithm and exponential maps are given by~\cite{BHATIA_2007}:
\begin{eqnarray}
\log_\Mat{P}(\Mat{X}) & = & \Mat{P}^{\frac{1}{2}}\log(\Mat{P}^{\frac{-1}{2}}\Mat{X}\Mat{P}^{\frac{-1}{2}})\Mat{P}^{\frac{1}{2}}\;, \label{eqn:log_map_AIRM} \\
\exp_\Mat{P}(\Mat{X}) & = & \Mat{P}^{\frac{1}{2}}\exp(\Mat{P}^{\frac{-1}{2}}\Mat{X}\Mat{P}^{\frac{-1}{2}})\Mat{P}^{\frac{1}{2}}\;. \label{eqn:exp_map_AIRM}
\end{eqnarray}

In~Eqns.~(\ref{eqn:log_map_AIRM}) and~(\ref{eqn:exp_map_AIRM}),
$\log(\cdot)$ and $\exp(\cdot)$ are the matrix logarithm and exponential operators, respectively.
For SPD matrices, they can be computed through Singular Value Decomposition (SVD).
If we let $\operatorname{diag} \left( \lambda_1,\lambda_2,\cdots,\lambda_d \right)$ be a diagonal matrix
formed from real values $\lambda_1,\lambda_2,\cdots,\lambda_d$ on diagonal elements
and \mbox{$\Mat{X} = \Mat{U} \operatorname{diag}\left(\lambda_i \right) \Mat{U}^T $} be the SVD of the symmetric matrix \mbox{$\Mat{X}$},
then
\begin{eqnarray}
\log(\Mat{X}) & = & \sum\limits_{r=1}^{\infty}{\frac{(-1)^{r-1}}{r}\left(\Mat{X} - \Mat{I} \right)^r} = \Mat{U} \operatorname{diag}\left(\ln(\lambda_i) \right) \Mat{U}^T, \label{eqn:matrix_log} \\
\exp(\Mat{X}) & = & \sum\limits_{r=0}^{\infty}{\frac{1}{r!}\Mat{X}^r} = \Mat{U} \operatorname{diag}\left(\exp(\lambda_i) \right) \Mat{U}^T. \label{eqn:matrix_exp}
\end{eqnarray}

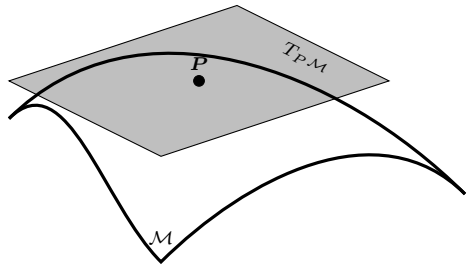
\begin{figure}[!b]
\begin{tikzpicture}
\centering
\filldraw [fill=gray!50,draw=black] (3,2.5) -- (6,3.5) -- node [below,sloped]{$\scriptstyle{T_{P}\mathcal{\scriptscriptstyle M}}$}(8,2.5) -- (5,1.5) -- cycle;
\draw[very thick] (3,2) to[curve to] (9,1);
\draw[very thick] (3,2) to[curve to] (5,0.1) node[above=3pt]{$\mathcal{\scriptstyle {M}}$} to[curve to] (9,1) ;
\filldraw [black] (5.5,2.5) circle (2pt) node[above=0.5pt] {$\Mat{\scriptstyle {P}}$};
\end{tikzpicture}
\centering
\caption{Conceptual illustration of the tangent space at point $\Mat{P}$ on a Riemannian manifold $\mathcal{M}$.}
\label{fig:manifold}
\end{figure}
\section{Log-Euclidean Bag of Words}
\label{sec:Riemannian_bow}

In this section we discuss how a conventional Bag of Words (BoW) model can be extended to incorporate the Riemannian structure of covariance matrices.
In a nutshell, the BoW representation is obtained by first clustering a large set of selected local descriptors with (usually) {\it k}-means,
in order to acquire a visual vocabulary or codebook.
Then, a histogram is extracted by assigning each descriptor to its closest visual word.

To devise a BoW model on Riemannian manifolds, we should address two sub-problems:

\begin{enumerate}

\item
Given a set of training samples {$\mathbb{X} = \left\{  \Mat{X}_i \right\}_{i=1}^{N}$} from the underlying $\mathcal{S}_{++}^d$ manifold
(where each point on the manifold corresponds to a covariance matrix),
how can a codebook $\mathbb{D} = \{ \Mat{D}_j \}_{j=1}^{k}$ be obtained?

\item
Given a codebook $\mathbb{D} = \{  \Mat{D}_j \}_{j=1}^{k}$
and a set of covariance matrices $\mathbb{Q} = \{\Mat{Q}_i\}_{i = 1}^p$ extracted from a query video,
how can a histogram be obtained for classification?

\end{enumerate}

\subsection{Riemannian Codebook}

In the most straightforward case, one can neglect the geometry of SPD matrices and vectorise training data to learn a codebook.
We note that SPD matrices form a closed set under normal matrix addition,
\ie, adding two SPD matrices results in another SPD matrix.
Therefore, a codebook can be generated by applying {\it k}-means on vectorised data.
More specifically, the resulting clusters are determined by computing the arithmetic mean of the nearest training vectors to that cluster.

Despite its simplicity, several studies argue against exploiting Euclidean geometry and vector form of SPD matrices for inference~\cite{Pennec_jmiv06,Tuzel_2008_PAMI}.
For instance, as shown by Pennec~\cite{Pennec_jmiv06} the determinant of the weighted mean could become greater than samples' determinants,
an undesirable outcome known as the swelling effect~\cite{arsigny2007}.
Moreover, symmetric matrices with negative or zero eigenvalues are at a finite distance from any SPD matrix in this framework.
In many problems like diffusion tensor MRI, this is not physically acceptable~\cite{arsigny2007,Pennec_jmiv06}.
Therefore, geometry of SPD matrices should be considered in creating the codebook.

To benefit from Riemannian geometry, an alternative is to replace the arithmetic mean with Karcher mean (also referred as Fr\'{a}chet or Riemannian mean)~\cite{Pennec_jmiv06}.
The Karcher mean is the point that minimises the following metric dispersion:
\begin{equation}
\Mat{X}^\ast= \underset{\Mat{X}}{\arg \min} \sum\nolimits_{i=1}^{N} \delta_g^2(\Mat{X}_i,\Mat{X})\;,
\label{eqn:karcher_mean}
\end{equation}

\noindent
where $\delta_g:\mathcal{M} \times \mathcal{M} \rightarrow \mathbb{R}^+$ is the associated geodesic distance function.
The discussion of the existence and uniqueness value of the Karcher mean as well as its computation are given in~\cite{Pennec_jmiv06}.

Computing the Karcher mean requires switching back and forth between a manifold and its tangent spaces.
This is computationally demanding, especially in our application where a large
number of high dimensional training points is available. More precisely, each mapping to a tangent space
can be computed using Cholesky factorisation with $O(d^3)$ for a $d \times d$ covariance matrix.
Therefore, we opt for a faster way of computing a codebook by minimum use of the logarithm map, \ie, Eqn.~(\ref{eqn:log_map_AIRM}).

Our idea here is to simplify the problem by embedding the manifold into a vector space.
For this purpose, we make use of a mapping from $\mathcal{S}_{++}^d$ into the space of symmetric matrices by
the principal matrix logarithm. The motivation comes from the fact that unlike the general case of invertible square matrices,
there always exists a unique, real and symmetric logarithm for any SPD matrix, which can be obtained by principal logarithm.
Moreover, $\log(\cdot)$ on $\mathcal{S}_{++}^d$ is diffeomorphism (a one-to-one, continuous, differentiable mapping
with a continuous, differentiable inverse). Formally,

\begin{theorem}
$\log(\cdot) : \mathcal{S}_{++}^d \rightarrow Sym(d)$ is $\mathit{C}^{\infty}$ and therefore both $\log(\cdot)$ and
its inverse $\exp(\cdot)$ are smooth, \ie, they are diffeomorphisms.
\end{theorem}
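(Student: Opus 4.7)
The plan is to establish the theorem in three stages: well-definedness of the map, the inverse relationships between $\log$ and $\exp$, and finally smoothness in both directions. The spectral theorem for symmetric matrices does most of the heavy lifting for the first two stages, while smoothness is the genuinely delicate point.

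First I would fix $X \in \mathcal{S}_{++}^d$ and invoke the spectral theorem to write $X = U \operatorname{diag}(\lambda_1, \dots, \lambda_d) U^T$ with $U \in O(d)$ and $\lambda_i > 0$. Since the natural logarithm is defined on $(0, \infty)$, the formula $\log(X) = U \operatorname{diag}(\ln \lambda_1, \dots, \ln \lambda_d) U^T$ in Eqn.~(\ref{eqn:matrix_log}) gives a well-defined element of $Sym(d)$ (one checks independence from the choice of $U$ on eigenspaces of repeated eigenvalues by noting that the functional calculus $f(X) = U \operatorname{diag}(f(\lambda_i)) U^T$ depends only on $X$ and $f$). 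A direct spectral computation then yields $\exp(\log(X)) = X$ for every $X \in \mathcal{S}_{++}^d$, and symmetrically $\log(\exp(Y)) = Y$ for every $Y \in Sym(d)$, so $\log$ and $\exp$ are mutually inverse bijections between $\mathcal{S}_{++}^d$ and $Sym(d)$.

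The main obstacle, and the heart of the proof, is smoothness. The matrix exponential $\exp(Y) = \sum_{r=0}^\infty Y^r/r!$ is a uniformly convergent power series on every bounded subset of $\mathbb{R}^{d \times d}$, so it is real-analytic and in particular $C^\infty$ on the ambient space, hence on $Sym(d)$ as well. The subtle direction is $C^\infty$-smoothness of $\log$ on $\mathcal{S}_{++}^d$. I would avoid arguing via eigenvectors (which fail to be smooth at points of eigenvalue collision) and instead either (a)~use the Riesz--Dunford holomorphic functional calculus, writing
\begin{equation*}
\log(X) \;=\; \frac{1}{2\pi i} \oint_{\Gamma} \ln(z)\, (zI - X)^{-1}\, dz,
\end{equation*}
where $\Gamma$ is any contour in $\mathbb{C} \setminus (-\infty, 0]$ enclosing the spectrum of $X$, and note that $X \mapsto (zI - X)^{-1}$ is smooth in $X$ with smooth (in fact holomorphic) $z$-dependence; or (b)~apply the inverse function theorem to $\exp$ at each $Y \in Sym(d)$, after verifying that the Fr\'echet derivative $D\exp_Y$ is a linear isomorphism of $Sym(d)$. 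The latter route reduces to checking that the Daletskii--Krein-type divided-difference representation of $D\exp_Y$ has only nonzero ``eigenvalues'' $(e^{\mu_i} - e^{\mu_j})/(\mu_i - \mu_j)$ (equal to $e^{\mu_i}$ when $\mu_i = \mu_j$), which is immediate since these quantities are strictly positive for real $\mu_i, \mu_j$.

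Combining the pieces, $\exp: Sym(d) \to \mathcal{S}_{++}^d$ is a smooth bijection whose derivative is everywhere invertible, so by the inverse function theorem its inverse $\log$ is also $C^\infty$. This establishes that $\log$ is a diffeomorphism with smooth inverse $\exp$, completing the theorem. I expect the holomorphic-calculus route to be the cleanest to write up in a few lines, while the inverse-function-theorem route makes the connection with the tangent-space maps of Eqns.~(\ref{eqn:log_map_AIRM})--(\ref{eqn:exp_map_AIRM}) more transparent.
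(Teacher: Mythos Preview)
Your argument is correct and complete: the spectral-calculus construction gives well-definedness and bijectivity, the power series gives $C^\infty$ smoothness of $\exp$, and either the Riesz--Dunford contour integral or the inverse function theorem (via the divided-difference eigenvalues of $D\exp_Y$, all strictly positive) gives $C^\infty$ smoothness of $\log$. Both routes you sketch are standard and sound.

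The paper, however, does not actually prove the theorem at all: its ``proof'' consists solely of a reference to Arsigny~\etal~\cite{arsigny2007}. So there is no approach to compare against on the paper's side; you have supplied a genuine self-contained argument where the authors simply cite the literature. If anything, your write-up is closer in spirit to what the cited source does (Arsigny~\etal\ also rely on the analyticity of the functional calculus and invertibility of the differential of $\exp$), so your proposal is entirely appropriate and considerably more informative than what appears in the paper.
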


\begin{proof}
We refer the reader to~\cite{arsigny2007} for the proof of this theorem.
\end{proof}

Embedding into the space of $d \times d$ symmetric matrices, $Sym(d)$,
through principal logarithm can be also understood as embedding $\mathcal{S}_{++}^d$ into its tangent space at identity matrix.
Since symmetric matrices (or equivalently tangent spaces) form a vector space,
then we can seamlessly employ Euclidean tools (like {\it k-}means to obtain a codebook) to tackle the problem in hand.
Other properties of the induced space, the log-Euclidean space, are studied in~\cite{arsigny2007}.
We note that our idea here can be labelled as an extrinsic approach, \ie, it depends on the embedding Euclidean space.

Given an SPD matrix $\Mat{X}$, its log-Euclidean vector representation, $\Vec{a} \in \mathbb{R}^{m}, {m=\frac{d(d+1)}{2}}$,
is unique and defined as
$\Vec{a} = \operatorname{Vec}\left(\log(\Mat{X})\right)$
where
$\operatorname{Vec}\left(\Mat{B}\right),\;\Mat{B} \in Sym(d)$ is:
\begin{equation}
\operatorname{Vec}\left(\Mat{B}\right) = \Big[b_{1,1}, \;\sqrt{2}b_{1,2}, \;\sqrt{2}b_{1,3}, \;\cdots
\;\sqrt{2}b_{1,d}, \;b_{2,2}, \;\sqrt{2}b_{2,3}, \;\cdots \;b_{d,d}\Big]^T\;.
\end{equation}

Having the training samples mapped to the identity tangent space,
we seek to estimate $k$ clusters $C_1,C_2,\cdots,C_k$ with centers
$\{\Mat{D}_j\}_{j=1}^{k}$ such that the sum of distances over all clusters is minimised.
This can be solved using the conventional {\it k}-means algorithm~\cite{bishop2006pattern}.
The procedure is summarised in Algorithm~\ref{alg:kmeans_pseudo_code}.

\begin{algorithm}[!tb]
\footnotesize
\begin{algorithmic}[1]
\REQUIRE
~\\
\begin{itemize}
\item
training set {$\mathbb{X} \mbox{=} \left\{  \Mat{X}_i \right\}_{i=1}^{N}$} from the underlying $\mathcal{S}_{++}^d$ manifold
\item
$nIter$, the number of iterations
\end{itemize}
\ENSURE
~\\
\begin{itemize}
\item
Visual dictionary \mbox{$\mathbb{D}= \{  \Mat{D}_j \}_{j=1}^{k}, \Mat{D}_j \in \mathbb{R}^{m}$}
\end{itemize}

\STATE Compute {$\mbox{x} \mbox{=} \left\{  \Vec{x}_i \right\}_{i=1}^{N}$}, log-Euclidean representation of $\mathbb{X}$ using $\Vec{x}_i = \operatorname{Vec}(\log(\Mat{X}_i))$.
\STATE Initialise the dictionary
{$\mathbb{D}= \{  \Mat{D}_j \}_{j=1}^{k}$}
by selecting $k$ samples from $\mbox{x}$ randomly.
\FOR{ $t = 1 \to nIter$}
\STATE Assign each point $\Vec{x}_i$ to its nearest cluster in $\mathbb{D}$.
\STATE Compute the average dispersion from cluster centers by $\varepsilon = \frac{1}{N} \sum_{j=1}^{k}\sum_{\Vec{x}_i \in C_j}dist(\Vec{x}_i,\Mat{D}_j)$.
\STATE If $\varepsilon$ is less than a predefined threshold, then break the loop; else recompute cluster centres $\{  \Mat{D}_j \}_{j=1}^{k}$ by $\Mat{D}_j = \frac{1}{|C_j|} \sum_{\Vec{x}_i \in C_j} \Vec{x}_i$.
\ENDFOR
\caption{Log-Euclidean k-means algorithm over $\mathcal{S}_{++}^d$ for learning the visual dictionary}
\label{alg:kmeans_pseudo_code}
\end{algorithmic}
\end{algorithm}

\subsection{Encoding Local Descriptors}

In the previous section, we elaborated on how a codebook for covariance matrices can be obtained.
In this subsection, we elaborate on several encoding methods for a set of local descriptors.
In other words, having a codebook, $\mathbb{D} = \{\Mat{D}_j\}_{j = 1}^k$, at our disposal
(obtained by Algorithm~\ref{alg:kmeans_pseudo_code}),
we seek to group a set of covariance matrices, $\mathbb{Q} = \{\Mat{Q}_i\}_{i = 1}^p$,
extracted from a query video, in order to find a histogram based representation.
Similar to the codebook learning stage, we first compute the log-Euclidean representation of $\mathbb{Q}$ using $\Vec{q}_i = \operatorname{Vec}(\log(\Mat{Q}_i)), \Vec{q}_i \in \mathbb{R}^{m}$.
Fig.~\ref{fig:Diagram} shows a conceptual diagram of our proposed histogram generation approach.

There are several ways of obtaining a histogram based representation, ranging in terms of complexity and amount of spatial and/or temporal information retained.
In this work we evaluate three methods, elucidated in the following subsections:
{\bf (i)}~hard assignment,
{\bf (ii)}~spatio-temporal pyramids,
{\bf (iii)}~sparse coding.

\subsubsection{Hard Assignment (HA)}
\label{sec:encoding_hard_assignment}

In its most straightforward and simplest form, for the set $\{\Vec{q}\}$, a histogram $H$ is obtained by Hard Assignment (HA),
which is related to Vector Quantisation~\cite{wong_ietbio_2014}.
This requires $p \times k$ comparisons.
The $j$-th ($1 \leq j \leq k$) dimension of $H$ is obtained using
$H_{j} = |C_{j}|$,
where $|C_{j}|$ denotes the number of vectors $\Vec{q}_{i}$ ($1 \leq i \leq p$) assigned to the $j$-th cluster.
The vectors are assigned to their closest vocabulary word in the dictionary using Euclidean distance.
The resulting histogram is $\ell_{2}$ normalised via $\widehat{H} = \frac{H}{\left \| H \right \|}_{2}$.

\subsubsection{Spatio-Temporal Pyramids (STP)}
\label{sec:encoding_spatio_temporal_pyramids}

HA encoding loses structure information between the vectors in the set $\{\Vec{q}\}$.
We encode the structure information to our LE-BoW model by incorporating Spatio-Temporal Pyramids
(STP)~\cite{wang2013dense,laptev2008learning}, an extension of spatial pyramids~\cite{lazebnik2006beyond,wiliem_pr_2014}.
For spatial domain we use the entire spatial block,
a subdivision into three horizontal stripes, and a $2 \times 2$ spatial grid.
For the temporal domain we use the entire duration as well as a subdivision into 2 temporal blocks.
For each cell of the grid, a separate hard assigned LE-BoW histogram is computed.
Then, a video is represented as concatenation of the cell histograms.
We use each grid structure as a separate channel and combine them using a $\chi^{2}$ kernel (see Section~\ref{sec:experiments}).
As illustrated in Fig.~\ref{fig:STP}, we have six channels to represent a video in STP encoding.

\begin{figure}[!tb]
  \centering
  \includegraphics[width=0.8\linewidth, height=1.8in]{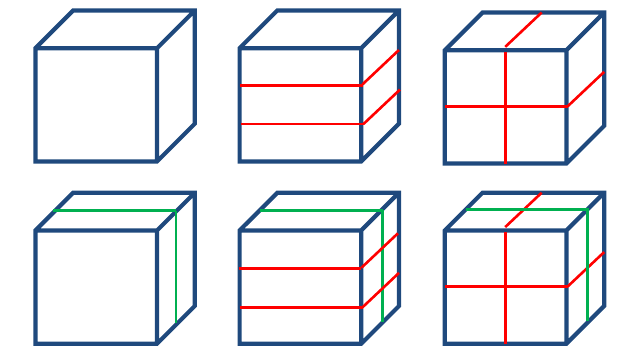}
  \vspace{-2ex}
  \caption
    {
    Spatio-temporal grids for STP histogram encoding.
    }
  \label{fig:STP}
\end{figure}

For classification, we use a non-linear support vector machine with a multi-channel RBF-$\chi ^2$ kernel defined by:
\begin{equation}
K(H_i,H_j) = \exp \left( -\sum_{c}\frac{1}{A^{c}}\delta_{\chi^{2}}(H_{i}^{c},H_{j}^{c}) \right),
\label{eqn:chi2_kernel}
\end{equation}

\noindent
where $\delta_{\chi^{2}}(H_{i}^{c},H_{j}^{c})$ is the $\chi^2$ distance between histogram $H_{i}$ and $H_{j}$
with respect to the $c$-th channel,
and $A^c$ is the mean value of the $\chi^2$ distances between  the training samples for the $c$-th channel.

\subsubsection{Sparse Coding (SC)}
\label{sec:encoding_sparse_coding}

Sparse Coding (SC), the optimal linear decomposition of a signal using a few elements of a
dictionary has proved to be effective for various computer vision tasks~\cite{elad2010sparse,wright2009robust,wong_ietbio_2014}.
Since the resulting histogram by either HA or STP is naturally sparse, it is possible to employ SC algorithms to encode local descriptors.
We use Algorithm~\ref{alg:kmeans_pseudo_code} to train a dictionary for SC.
However, it is also possible to use dedicated algorithms for this purpose~\cite{aharon2006svd,kreutz2003dictionary}.

Kernel sparse coding was previously proposed in~\cite{harandi2012sparse} to take into account the geometry of SPD matrices with the aid of the Stein kernel~\cite{Cherian:PAMI}.
However, the Stein metric fails in our application where many low rank SPD matrices exist.
More specifically, the determinant of SPD matrices formed from HOF features can be close to zero.
As a result, other SPD matrices locate at infinite distance to those low rank matrices.

A vector of weights $\Mat{\alpha} = [ \alpha_{1},\alpha_{2}, \cdots ,\alpha_{k} ]^{T}$ is computed
for each $\Vec{q}_i \in \mathbb{R}^{m}, 1\leq i \leq p$, by solving a minimisation problem that selects a sparse set of dictionary atoms.
More specifically, having a dictionary $\Mat{D} \in \mathbb{R}^{m \times k}$ at our disposal,
the weight vector $\Vec{\alpha} \in \mathbb{R}^{k}$ is obtained via solving the
following $\ell_{1}$-minimisation (also known as Lasso~\cite{elad2010sparse}):
\begin{equation}
\min \frac{1}{2}\left \| \Mat{D} \Vec{\alpha} - \Vec{q}_{i} \right \|_{2}^{2} + \lambda \left \| \Vec{\alpha} \right \|_{1}.
\label{eqn:Sparse_Coding}
\end{equation}

Pooling local sparse codes is performed via averaging.
To solve Eqn.~(\ref{eqn:Sparse_Coding}) we used the SPAMS optimisation toolbox ({\it http://spams-devel.gforge.inria.fr}) for sparse estimation problems.

\subsection{Computational Complexity}

The covariance matrices can be computed efficiently (\ie, in one pass over the video) via integral videos~\cite{sanin2013spatio}.
This results in  $O( WHT d^2)$ operations for computing a $d \times d$ covariance matrix from a $W \times H \times T$ video.
Generating histograms in LE-BoW method requires covariance matrices to be mapped to log-Euclidean space first.
The matrix logarithm can be computed using Cholesky factorisation with $O(d^3)$ operations.
Computing $K$ distances using Euclidean distance can be done at the cost of $O(\frac{1}{2} d^2)$.
Therefore, computing a $K$ dimensional LE-BoW (with HA encoding) signature for one covariance matrix requires $O((WHT+ \frac{K}{2}) d^2 + d^3)$ operations.

\newpage
\section{Experiments}
\label{sec:experiments}

In this section we compare and contrast the performance of the proposed LE-BoW method against several state-of-the-art approaches.
Before delving into experiments, we elaborate how a descriptive representation of action videos can be attained by covariance matrices.
To this end, from each video a set of covariance matrices is extracted and then passed to LE-BoW to generate histograms (see Fig.~\ref{fig:Diagram}).

To generate covariance matrices, a set of overlapping spatio-temporal blocks are extracted from the image sequence
and the covariance matrix for each block is obtained from Histogram of Optical Flow (HOF) features of densely extracted trajectories within that block.
To obtain trajectories, images of a sequence are first resized to $240 \times 360$ and then pixels of an image sequence are sampled on a $W \times W$ spaced grid.
Then, the location of the sampled points is estimated/tracked in $L$ subsequent frames using the estimated optical flow field of the sequence, $\omega$,
convolved with a $3 \times 3$ median filter kernel $M$.
More specifically, given a sampled point $P_t$ in frame $I_t$,
its tracked point $P_{t+1}$ in frame $I_{t+1}$ is obtained via $P_t + M \ast \omega$.
Once the trajectory points $(P_t, P_{t+1}, \cdots, P_{t+L-1})$ in $L$ subsequent frames is found,
the HOF is computed in an $N \times N$ pixels volume around each $P_t$.
To embed structure information, each volume is further divided into a spatio-temporal grid of size $n_\sigma \times n_\sigma \times n_\tau$.

We have used the code available by Wang \etal~\cite{wang2013dense} for our dense trajectory feature
extraction and followed the default parameter values there (\ie $W=5, L=15,$ and $N=32$).
Trajectories are extracted in 8 spatial scales with $n_\sigma = n_\tau = 2$.
Since each HOF is 72 dimensional (\ie 9 bins in $n_\sigma \times n_\sigma \times n_\tau$ grid),
our covariance matrices are 72 $\times$ 72 dimensional.
To avoid having rank deficient covariance matrices, blocks with the number of trajectories below 72 are rejected.
We cluster a subset of 30K randomly selected covariance matrices and fix the number of visual words to 2000.
For classification, we use one-against-all approach and a non-linear support vector machine with a RBF-$\chi^2$ kernel.
We report our LE-BoW model with Hard Assignment (HA), Spatio-Temporal Pyramids (STP), and Sparse Coding (SC) encoding methods.
We show the discrimination power of our proposed method against several state-of-the-art methods previously applied on three datasets:
KTH~\cite{schuldt2004recognizing}, Olympic Sports~\cite{niebles2010modeling}, and Activity of Daily Living~\cite{messing2009activity}.

\def \DTSIZE {0.16}
\begin{figure*}[!b]
  \begin{minipage}{1\textwidth}

  \begin{minipage}{0.04\textwidth}
    \centerline{\bf (a)~}
  \end{minipage}
  \hfill
  \begin{minipage}{0.95\textwidth}
  \includegraphics[width=\DTSIZE \textwidth,keepaspectratio]{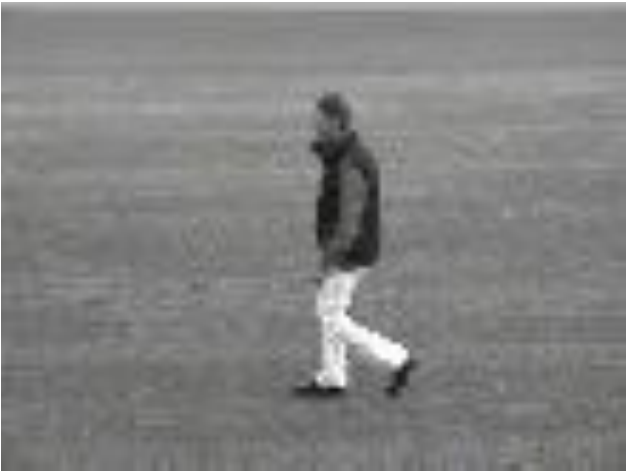}
  \includegraphics[width=\DTSIZE \textwidth,keepaspectratio]{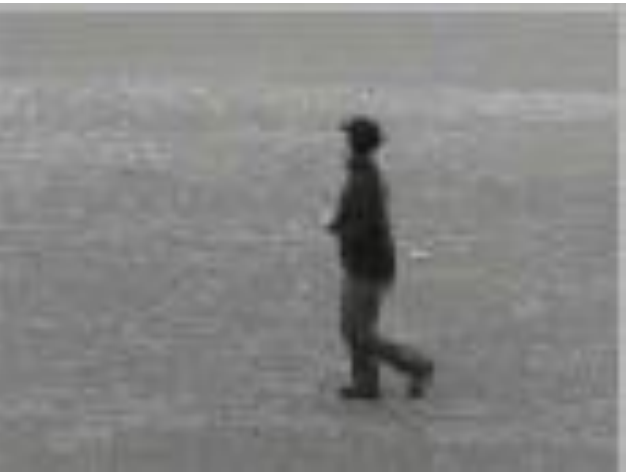}
  \includegraphics[width=\DTSIZE \textwidth,keepaspectratio]{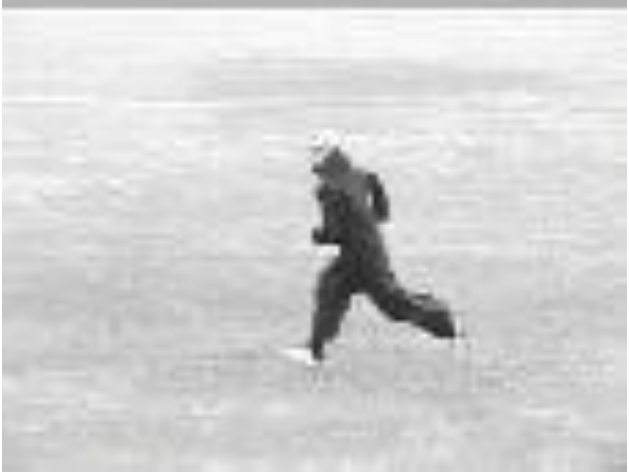}
  \includegraphics[width=\DTSIZE \textwidth,keepaspectratio]{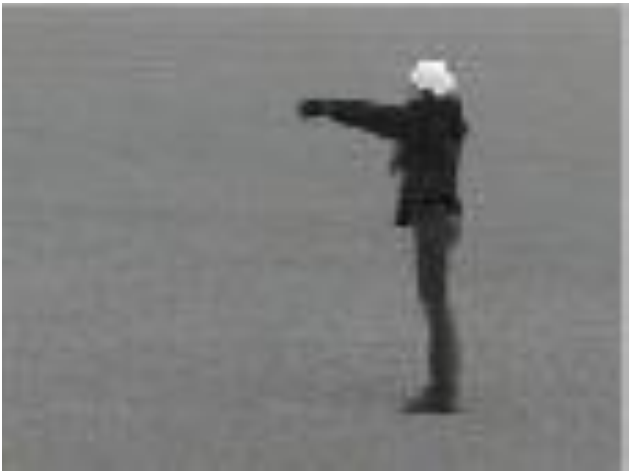}
  \includegraphics[width=\DTSIZE \textwidth,keepaspectratio]{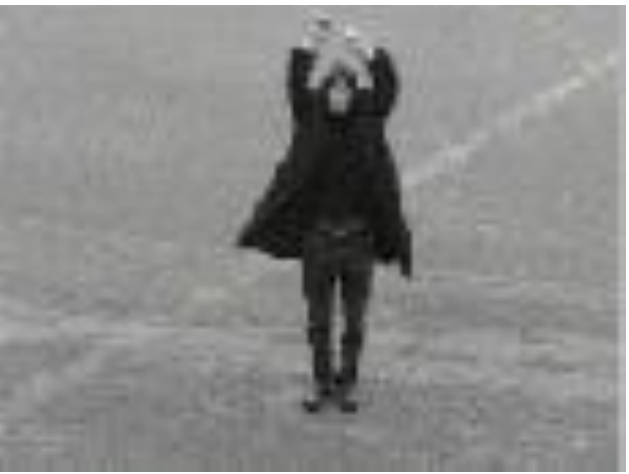}
  \includegraphics[width=\DTSIZE \textwidth,keepaspectratio]{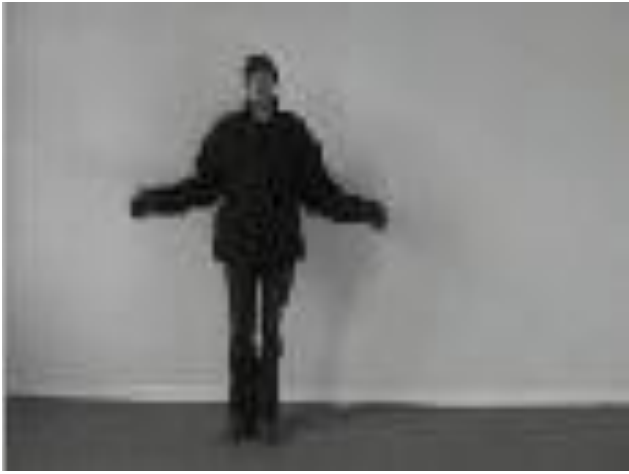}
  \end{minipage}
  
  ~\\
  
  \begin{minipage}{0.04\textwidth}
    \centerline{\bf (b)~}
  \end{minipage}
  \hfill
  \begin{minipage}{0.95\textwidth}
  \includegraphics[width=\DTSIZE \textwidth,keepaspectratio]{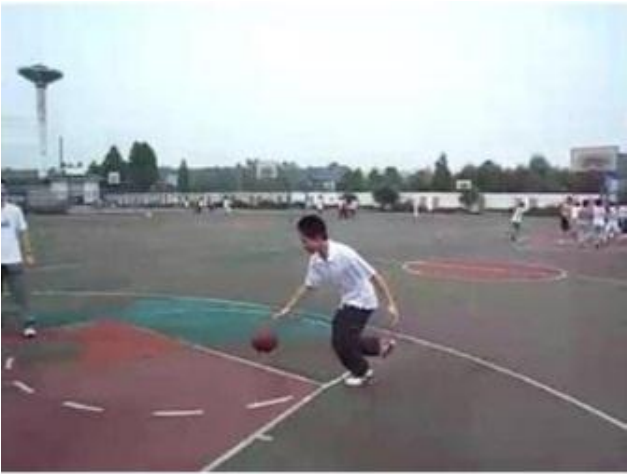}
  \includegraphics[width=\DTSIZE \textwidth,keepaspectratio]{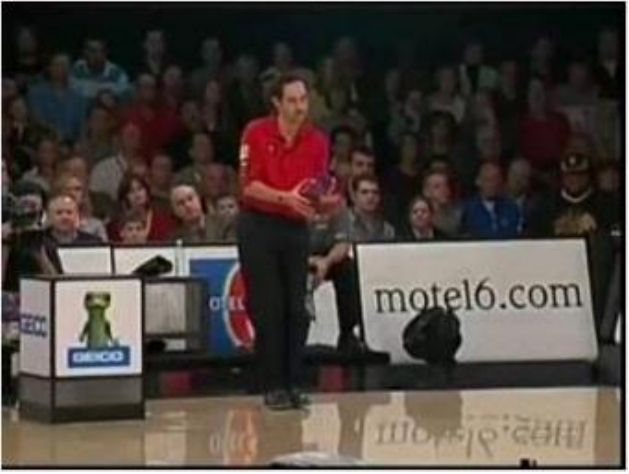}
  \includegraphics[width=\DTSIZE \textwidth,keepaspectratio]{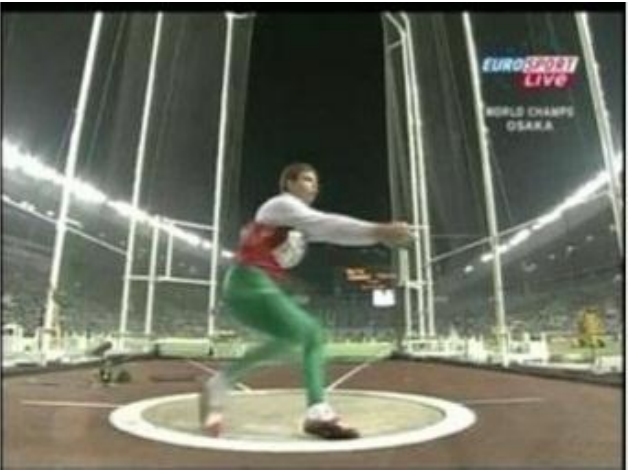}
  \includegraphics[width=\DTSIZE \textwidth,keepaspectratio]{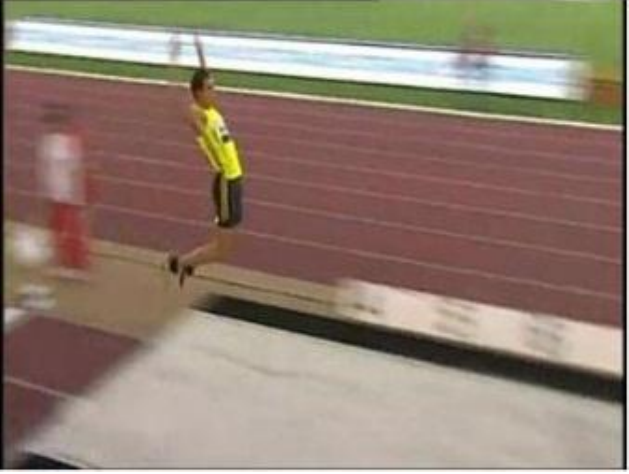}
  \includegraphics[width=\DTSIZE \textwidth,keepaspectratio]{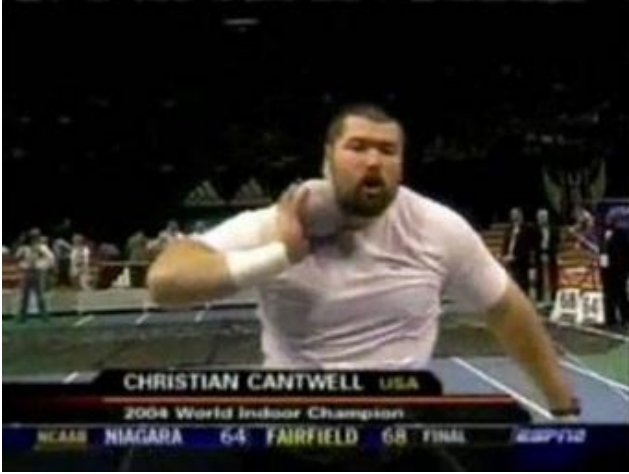}
  \includegraphics[width=\DTSIZE \textwidth,keepaspectratio]{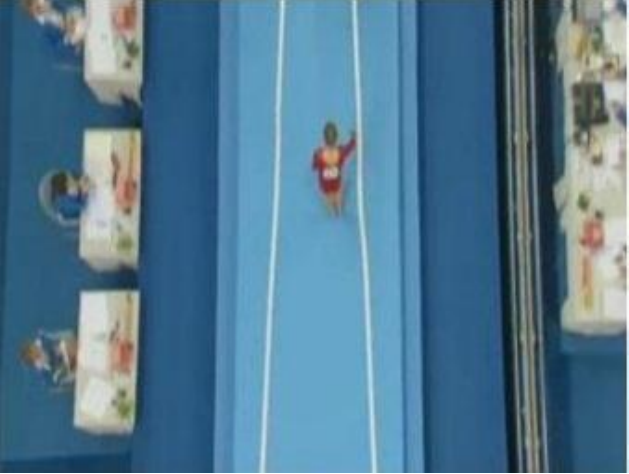}
  \end{minipage}
  
  ~\\
  
  \begin{minipage}{0.04\textwidth}
    \centerline{\bf (c)~}
  \end{minipage}
  \hfill
  \begin{minipage}{0.95\textwidth}
  \includegraphics[width=\DTSIZE \textwidth,keepaspectratio]{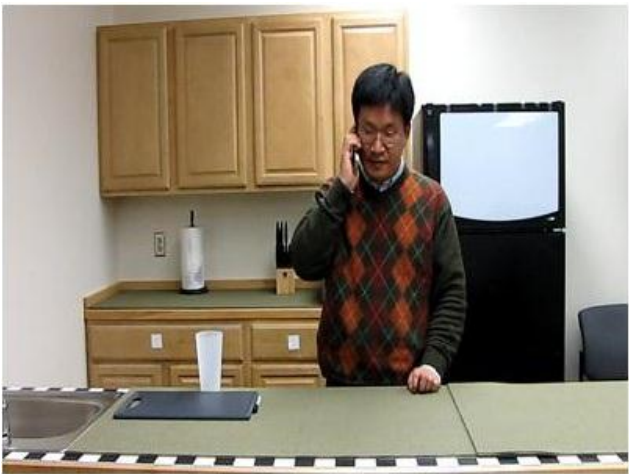}
  \includegraphics[width=\DTSIZE \textwidth,keepaspectratio]{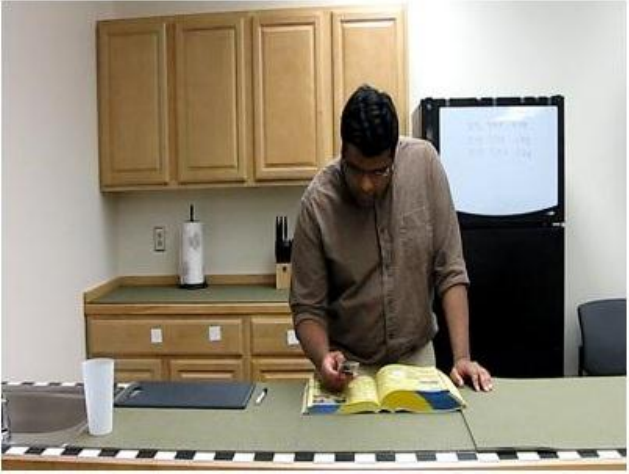}
  \includegraphics[width=\DTSIZE \textwidth,keepaspectratio]{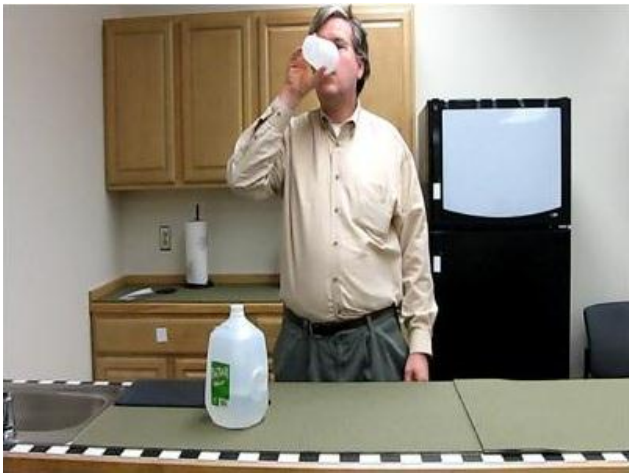}
  \includegraphics[width=\DTSIZE \textwidth,keepaspectratio]{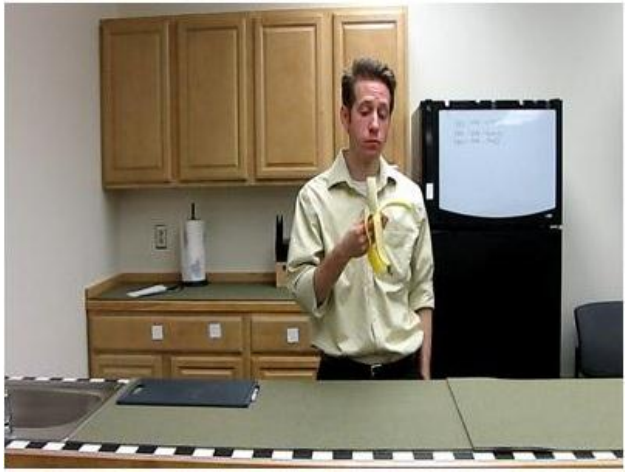}
  \includegraphics[width=\DTSIZE \textwidth,keepaspectratio]{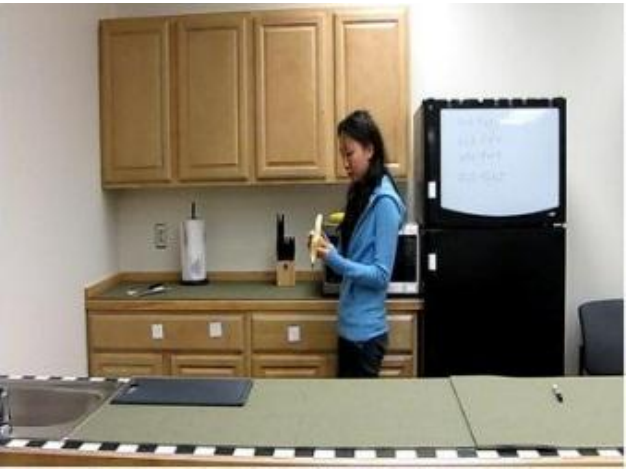}
  \includegraphics[width=\DTSIZE \textwidth,keepaspectratio]{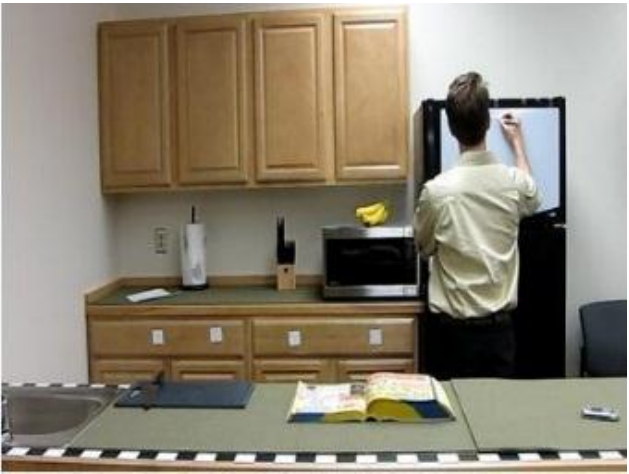}
  \end{minipage}

  \end{minipage}
  \caption
    {
    Example images from the datasets used in our experiments:
    {\bf (a)}~KTH~\cite{schuldt2004recognizing},
    {\bf (b)}~Olympic Sports~\cite{niebles2010modeling},
    {\bf (c)}~Activity of Daily Living~\cite{messing2009activity}.
    }
  \label{fig:action_example}
\end{figure*}

\subsection{KTH Dataset}

The KTH dataset~\cite{schuldt2004recognizing} contains six human action classes:
walking, jogging, running, boxing, hand-waving, and hand-clapping,
performed by 25 subjects in 4 scenarios:
outdoors, outdoors with scale variation, outdoors with varying clothes,
and indoors; see Fig.~\ref{fig:action_example} for examples.
The videos are recorded with static and homogeneous background.
However, the camera is not static, \ie vibration and unknown zooming exist.
In total, the data consists of 2391 video samples.
We follow the original experiment setup of the authors
(\ie, dividing the samples into subjects: 2, 3, 5, 6, 7, 8, 9, 10, 22 for the test set and the remaining 16 subjects for the training set).

On KTH dataset, Laptev \etal~\cite{laptev2008learning} proposed a system where spatio-temporal interest points are extracted and described using HOG/HOF descriptors.
In order to classify a query video, BoW model is utilised in a multi-channel SVM classifier with $\chi^2$ kernel.
Gilbert \etal~\cite{gilbert2011action} propose to use an overcomplete set of simple 2D corners in space and time.
The extracted points are first grouped spatially and temporally using a hierarchical process.
The most distinctive and descriptive features are learned.
Wang \etal~\cite{wang2013dense} track densely sampled points by a median filter kernel and extract aligned shape, appearance, and motion features.
BoW model is utilised in a 30-channel (6 channels and 5 types of features) SVM classifier with $\chi^2$ kernel for classification.

In Table~\ref{tab:LE-BoW_Others_KTH}, we compare our proposed method against the aforementioned methods on the KTH dataset.
Our LE-BoW approach is superior to the method proposed by Laptev \etal and Gilbert \etal.
The HA and STP encoding methods result in performance on par with Wang's system.
However, the SC encoding method improved the accuracy by approximately 2 percentage points.

\begin{table*}[!tb]
  \centering
 \caption
    {
       Comparison between the proposed approach against previous methods on the KTH dataset;
       CCR: Correct	Classification Rate (in \%).
    }
   \label{tab:LE-BoW_Others_KTH}
  \begin{tabular}{lcc}
    \toprule
    {\bf Method}   & {\bf ~~CCR~~} \\
    \toprule
    {Laptev \etal~\cite{laptev2008learning}}       & 91.8       \\
    {Gilbert \etal~\cite{gilbert2011action}}   & 94.5       \\
    {Wang \etal~\cite{wang2013dense}}           & 95.3       \\
    {\bf proposed LE-BoW (HA)}       & 95.0		  \\
    {\bf proposed LE-BoW (STP)}           & { 95.7}  \\
    {\bf proposed LE-BoW (SC)}           & {\bf 97.4}  \\
    \bottomrule
  \end{tabular}
\end{table*}

\subsection{Olympic Sports Dataset}

The Olympic Sports dataset~\cite{niebles2010modeling} contains videos of athletes practising various sport activities.
All video sequences were collected from YouTube and their class labels annotated with the help of Amazon Mechanical Turk.
There are 16 sports actions: high-jump, long-jump, triple-jump, pole-vault, basketball lay-up, bowling, tennis-serve, platform, discus,
hammer, javelin, shot-put, springboard, snatch, clean-jerk, and vault, represented by a total of 783 videos.
We use the standard train/test split recommended by the authors (649 sequences for training and 134 sequences for testing).
Example images are shown in Fig.~\ref{fig:action_example}.

Niebles \etal~\cite{niebles2010modeling} represent activities as temporal compositions of motion segments.
They train a discriminative model that encodes a temporal decomposition of video sequences and appearance models for each motion segment.
For classification, a query video is matched to the model according to the learned appearances and motion segment decomposition.
The classification is based on the quality of matching between the motion segment classifiers and the temporal segments in the query sequence.
In Liu \etal~\cite{liu2011recognizing}, human actions are represented by a set of action attributes.
A unified framework introduced wherein the attributes can be discriminatively selected.
The framework is built upon a latent SVM formulation where latent variables capture the degree of importance of each attribute for each action class.

In Table~\ref{tab:LE-BoW_Others_OS}, we compare our proposed method against state-of-the-art methods on the Olympic Sports dataset.
Mean Average Precision over all classes is reported as in~\cite{niebles2010modeling}.
Using spatio-temporal pyramids consistently improved the classification rate (from 74.9\% to 80.6\%).
The same improvement is also observed in the work of Wang \etal~\cite{wang2013dense} by considering trajectory shape, HOG, HOF, MBHx, and MBHy descriptors in a STP encoding approach.
However, Wang \etal report 58.7\% by single HOF descriptor.
In contrast, we observed considerable improvement by taking the covariance of HOF features (\ie 74.9\% by HA).
On this dataset, the SC encoding method works on par with the STP encoding approach.

\begin{table*}[!tb]
  \centering
 \caption
    {
       Comparisons between the proposed approach to the state-of-the-art methods on Olympic Sports  dataset;
       MAP: Mean Average Precision over all classes (in \%).
    }
   \label{tab:LE-BoW_Others_OS}
  \begin{tabular}{lcc}
    \toprule
    {\bf Method}   & {\bf ~~MAP~~} \\
    \toprule
    {Niebles \etal~\cite{niebles2010modeling}}       & 72.1       \\
    {Liu \etal~\cite{liu2011recognizing}}   & 74.4       \\
    {Wang \etal~\cite{wang2013dense}}           & 77.2       \\
    {\bf proposed LE-BoW (HA)}       & 74.9		  \\
    {\bf proposed LE-BoW (STP)}           & {\bf 80.6}  \\
    {\bf proposed LE-BoW (SC)}           & { 79.9}  \\
    \bottomrule
  \end{tabular}
\end{table*}

\subsection{Activity of Daily Living Dataset}

This dataset consists of 150 videos of 5 subjects performing a series of daily tasks in a kitchen environment, acquired using a stationary camera~\cite{messing2009activity}.
As recommended by~\cite{messing2009activity}, we evaluate our results on this dataset using 5-fold cross validation.
In each fold, videos from four individuals are considered for training and the fifth for testing.
Sample frames are shown in Fig.~\ref{fig:action_example}.

We compare the proposed LE-BoW approach against 3 state-of-the-art human action classification systems:
\textbf{(i)}~Laptev \etal~\cite{laptev2008learning},
\textbf{(ii)}~Matikainen \etal~\cite{matikainen2010representing},
\textbf{(iii)}~Messing \etal~\cite{messing2009activity}.
In~\cite{matikainen2010representing}, a method for augmenting quantised local features with relative spatial-temporal relationships between pairs of features is proposed.
Their discriminative classifier is trained by estimating all of the cross probabilities for various local features of an action.
Messing \etal~\cite{messing2009activity} track Harris3D~\cite{Laptev03space-timeinterest} interest points with a KLT tracker~\cite{lucas1981iterative}
and extract velocity history information along the trajectories.
Appearance and location features are utilised in a generative mixture model to improve the recognition performance.

Table~\ref{tab:LE-BoW_Others_ADL} shows that the proposed LE-BoW approach with simple HA encoding outperforms the state-of-the-art methods.
SC encoding obtains the highest performance in which the correct recognition accuracies for individual subjects are: 86.7, 90.0, 93.3, 90.0, 96.7.
In total, SC encoding was unable to correctly classify only 13 videos (out of 150).
STP encoding, with the mean correct classification rate of 90.7\%, is on par with SC encoding.

\begin{table*}[!tb]
  \centering
 \caption
    {
       Comparisons between the proposed approach to the state-of-the-art methods on Activity of Daily Living dataset;
       CCR: Correct	Classification Rate (in \%).
    }
   \label{tab:LE-BoW_Others_ADL}
  \begin{tabular}{lcc}
    \toprule
    {\bf Method}   & {\bf ~~CCR~~} \\
    \toprule
    {Laptev \etal~\cite{laptev2008learning}}       & 80      \\
    {Messing \etal~\cite{messing2009activity}}   & 89       \\
    {Matikainen \etal~\cite{matikainen2010representing}}           & 70       \\
    {\bf proposed LE-BoW (HA)}       & 90.0		  \\
    {\bf proposed LE-BoW (STP)}           & { 90.7}  \\
    {\bf proposed LE-BoW (SC)}           & {\bf 91.3}  \\
    \bottomrule
  \end{tabular}
\vspace{3ex}
\end{table*}

\newpage
\subsection{Trade-Off Between Accuracy and Complexity}

Encoding local descriptors via HA (Section~\ref{sec:encoding_hard_assignment}) has the least computational burden.
However, it loses the structure information between the set of covariance matrices extracted from a query video.
Notable improvements are observed by encoding structure information using
STP (Section~\ref{sec:encoding_spatio_temporal_pyramids})
or
SC (Section~\ref{sec:encoding_sparse_coding}).
Compared to both HA and STP, obtaining the histogram using SC is considerably more computationally demanding,
as SC requires solving a minimisation problem for each spatio-temporal block.

The experiments show that STP provides a good trade-off between the recognition accuracy and computational cost.
For example, by averaging over 10 runs and 100 videos, histogram generation using SC for one video took 33 seconds on a 2.5GHz Intel i5 CPU with 4GB of RAM using Matlab,
while the same test just needed 1.2 and 0.6 seconds for STP and HA, respectively.

\subsection{Further Discussion}

In this subsection we provide further insight into the performance and properties of the proposed method.
We first provide more performance results using Recall and Precision on the studied datasets.
We then asses the effect of various spatio-temporal grids in the HOF descriptor on recognition accuracy.

Tables~\ref{tab:Recall_Prec_KTH} to~\ref{tab:Recall_Prec_ADL}
show the performance of the proposed LE-BoW using STP encoding in terms of Precision and Recall measures.
On the KTH dataset, the ``boxing'' action obtained the highest performance according to both of the measures.
The ``hand clapping'' action obtained the second best Recall value,
while ``hand waving'' obtained the second best Precision value.
On the Olympic Sports dataset, none of the actions were confused with ``javelin throw''.
On the Activity of Daily Living dataset, ``drink water'' and ``lookup in phonebook'' were reliably recognised, \ie, 100\% on both measures.

We study the effect of several spatio-temporal grids in HOF computation on recognition accuracy of KTH actions.
Fig.~\ref{fig:graph} shows the performance for various values of $n_\sigma \times n_\sigma \times n_\tau$.
Starting from the grid $1 \times 1 \times 2$ to $2 \times 2 \times 2$, the classification accuracy improves with further increasing the number of cells.
However, there is a notable drop by moving from the grid $2 \times 2 \times 2$ to $2 \times 2 \times 3$.
This is not surprising because we reject blocks with the trajectory numbers less than the dimension of HOF (\ie 108 in this structure, $2 \times 2 \times 3 \times 9 = 108$).
We observed that many blocks are rejected with the threshold value equal to 108.
As a result, the final histogram is not rich enough (compared to the grid $2 \times 2 \times 2$).
Further increasing in the number of spatial cells, \ie $n_\sigma=3$ does not yield better results and hence we opted for the grid $2 \times 2 \times 2$ in all our experiments.

\begin{table*}[!tb]
\centering
\caption
    {
       Precision and Recall on KTH dataset with STP encoding (values are in \%).
    }
\label{tab:Recall_Prec_KTH}
\begin{tabular}{l*{6}{c}r}
              & Boxing & Hand Clapping & Hand Waving & Jogging & Running  & Walking  \\
\hline
{\bf Precision}         & 98.6 & 94.6 & 97.1 & 95.1 & 93.8 & 95.2   \\
{\bf Recall}            & 97.9 & 97.2 & 95.1 & 93.8 &  94.4 & 95.8   \\
\end{tabular}
\vspace{3ex}
\end{table*}

\begin{table*}[!tb]
\centering
\caption
    {
       Precision and Recall on Olympic Sports dataset with STP encoding (values are in \%).
    }
\label{tab:Recall_Prec_OS}
\begin{tabular}{lcccccc}
               & Basketball Layup & Bowling & Clean Jerk & Discus Throw & Hammer Throw  & High Jump\\
\hline
{\bf Precision}         & 80.0 & 80.0 & 77.8 & 76.9 & 75.0 & 83.3   \\
{\bf Recall}            & 80.0 & 88.9 & 70.0 & 90.9 &  75.0 & 90.9  \\
\end{tabular}

\vspace{2ex}

\begin{tabular}{lccccccc}
                  & Javelin Throw  & Long Jump & Platform 10m & Pole Vault & Shot Put & Snatch & Springboard 3m \\
\hline
Precision         & 100 & 71.4 & 77.8 & 85.7 & 88.9 & 77.8  & 85.7 \\
Recall            & 75.0 & 83.3 & 77.8 & 75.0 &  80.0 & 77.8 &  75.0 \\
\end{tabular}

\vspace{2ex}

\begin{tabular}{lccc}
               & Tennis Serve & Triple Jump & Vault\\
\hline
{\bf Precision}          & 85.7 & 75.0 & 90.0    \\
{\bf Recall}             & 85.7 & 75.0 & 90.0   \\
\end{tabular}
\vspace{3ex}
\end{table*}

\begin{table*}[!tb]
\centering
\caption
    {
       Precision and Recall on Activity of Daily Living dataset with STP encoding (values are in \%).
    }
\label{tab:Recall_Prec_ADL}
\begin{tabular}{lcccccc}
               & Answer Phone & Chop Banana & Dial Phone & Drink Water & Eat Banana & Eat Snack \\
\hline
{\bf Precision}         & 86.7 & 86.7 & 86.7 & 100 & 81.3  & 86.7  \\
{\bf Recall}            & 86.7 & 86.7 & 86.7 & 100 &  86.7  & 86.7 \\
\end{tabular}

\vspace{2ex}

\begin{tabular}{lccccc}
                  & Lookup In Phonebook & Peel Banana & Use Silverware & Write on Whiteboard \\
\hline
{\bf Precision}         & 100 & 92.9 & 93.3 & 93.3  \\
{\bf Recall}            & 100 & 86.7 & 93.3 &  93.3  \\
\end{tabular}
\vspace{3ex}
\end{table*}

\begin{figure}[!tb]
  \centering
  \includegraphics[width=0.7\linewidth]{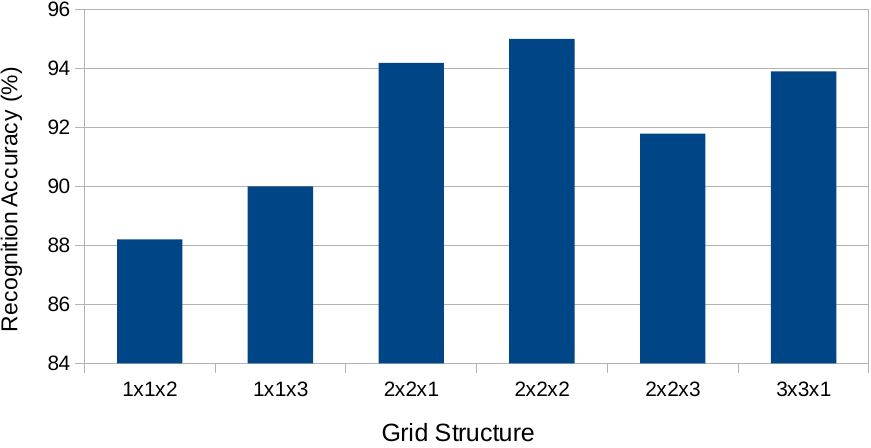}
  \caption
    {
    Evaluation of cell grid structure parameters on KTH dataset.
    }
  \label{fig:graph}
\end{figure}

\section{Conclusions}
\label{sec:conclusions}

We devised an approach to extend the popular Bag of Words (BoW) models to a special class of non-Euclidean spaces,
the space of Symmetric Positive Definite (SPD) matrices formed by 
covariance descriptors of spatio-temporal features~\cite{sanin2013spatio}.
In doing so, we elaborated on how a codebook and subsequently histograms can be obtained for covariance matrices
and devised Log-Euclidean Bag of Words (LE-BoW), an extrinsic extension of conventional BoW using Riemannian geometry of SPD matrices. 
The main ingredient of our proposal is a diffeomorphism that embeds Riemannian manifold of SPD matrices into an Euclidean space.
This is consistent with several studies~\cite{arsigny2007,Caseiro:PR:2012,Hu:PAMI:2012,Guo:TIP:2013}
that demonstrate the benefit of such embedding.

The proposed framework was validated by experiments on three challenging action recognition datasets,
namely KTH~\cite{schuldt2004recognizing}, Olympic Sports~\cite{niebles2010modeling}, and Activity of Daily Living~\cite{messing2009activity}.
The experiments show that the proposed LE-BoW approach for classifying human actions performs better than the state-of-the-art methods
proposed by Laptev \etal~\cite{laptev2008learning}, Niebles \etal~\cite{niebles2010modeling}, and Wang \etal~\cite{wang2013dense}.
We believe that our work motivates future research on extending well-known machine learning inference tools to their Riemannian counterparts.
\section*{Acknowledgements}

NICTA is funded by the Australian Government through the Department of Communications, as well as the Australian Research Council through the ICT Centre of Excellence program.

\bibliographystyle{ieee}

\bibliography{references}

\end{document}